\title{Partially Ordered Top-Quality Planning}
\title{Some Orders Are Important: Partially Preserving Orders in Top-Quality Planning}
\author{
    Michael Katz, Junkyu Lee, Jungkoo Kang, Shirin Sohrabi
}
\newcommand{\kstar}{\ensuremath{K^{\ast}}}
\newcommand{\egcite}[1]{(e.g., \citeauthor{#1} \citeyear{#1})}
\newcommand{\inlinecite}[1]{\citeauthor{#1} \shortcite{#1}}
\def\reals{{\mathbb R}}
\newcommand{\nnreals}{\reals^{0+}}
\def\defemb#1#2{\expandafter\def\csname #1\endcsname
                              {\relax\ifmmode #2\else\hbox{$#2$}\fi}}
 \newenvironment{proof}{\noindent {\bf Proof:}}%
{\hfill $\square$ \par \addvspace{\bigskipamount}}
\newtheorem{theorem}{Theorem}
\newtheorem{definition}{Definition}
\newcommand{\sas}{\textsc{sas}${}^{+}$}
\newcommand{\strips}{\textsc{strips}}
\newcommand{\vars}{\ensuremath{\mathcal V}}
\newcommand{\ops}{\ensuremath{\mathcal O}}
\newcommand{\op}{o}
\newcommand{\goal}{\ensuremath{s_\star}}
\newcommand{\cost}{\ensuremath{C}}
\newcommand{\eff}{\ensuremath{\mathit{eff}}}
\newcommand{\pre}{\ensuremath{\mathit{pre}}}
\newcommand{\variables}[1]{\vars(#1)}
\newcommand{\domain}{\ensuremath{\mathit{dom}}}
\newcommand{\state}{\ensuremath{s}}
\newcommand{\states}{\ensuremath{\mathcal S}}
\newcommand{\tuple}[1]{\langle #1 \rangle}
\newcommand{\unordered}[1]{\textsc{U}_{#1}}
\newcommand{\leftq}{\llbracket}
\newcommand{\rightq}{\rrbracket}
\newcommand{\pordered}[1]{\textsc{P}_{#1}}
\newcommand{\ptask}{\Pi}
\newcommand{\ptaskPar}[1]{\ptask^{#1} = \langle \vars^{#1},\ops^{#1},\initstate^{#1},\goal^{#1}\rangle}
\newcommand{\ptaskParSTD}{\ptaskPar{}}
\newcommand{\initstate}{\state_{0}}
\newcommand{\goalstates}{\states_{\goal}}
\newcommand{\costfunc}{cost}
\newcommand{\var}{v}
\newcommand{\applied}[1]{\leftq #1 \rightq}
\newcommand{\plan}{\pi}
\newcommand{\numtasks}[1]{\footnotesize{(#1)}}
\newcommand{\allplans}{\cP_{\ptask}}
\newcommand{\multiset}[1]{\textsc{MS}(#1)}
\newcommand{\lmcut}{\textsc{lm}\mbox{cut}}
\newcommand{\planres}[2]{#1\!\mid_{#2}}
\newcommand{\porops}[1]{\textsc{T}(#1)}
\newcommand{\eporops}[2]{{\textsc{T}}_{#2}(#1)}
\newcommand{\sss}{\textsc{T}}
\newcommand{\fdrpair}[2]{#1\mbox{=}#2}
\begin{document}

\maketitle

\begin{abstract}
The ability to generate multiple plans is central to using planning in real-life applications. 
Top-quality planners generate sets of such top-cost plans, 
allowing flexibility in determining equivalent ones. In terms of the order between actions in a plan, the literature only considers two extremes -- either all orders are important, making each plan unique, or all orders are unimportant, treating two plans differing only in the order of actions as equivalent.
To allow flexibility in selecting important orders, we propose specifying a subset of actions the orders between which are important, interpolating between the top-quality and unordered top-quality planning
problems. We explore the ways of adapting partial order reduction search pruning techniques to address this new computational problem and present experimental evaluations demonstrating the benefits of exploiting  such techniques in this setting.
\end{abstract}

\section{Introduction}
Motivated by the different settings and application domains,  the problem of finding multiple plans --  diverse planning \cite{nguyen-et-al-aij2012, katz-sohrabi-aaai2020},  top-k planning \cite{katz-et-al-icaps2018,speck-et-al-aaai2020},  or top-quality planning \cite{katz-et-al-aaai2020,katz-lee-socs2023} --  
specify valid solutions by limiting either the number of plans or 
their costs, requiring to find all plans under that criterion.
Some flexibility in that limitation is possible, allowing to specify an equivalence between plans as in quotient top-quality planning. In practice, however, only two extremes are considered in the literature --
consider some plans as equivalent, e.g., plans that differ only in the order of the actions used, as in the case of unordered top-quality planning, or consider all plans as unique.


In this paper,  we introduce a new class of problems called partially ordered top-quality planning, allowing us to interpolate between the two extremes by specifying 
a set of actions whose ordering in the plan is important.  The motivation behind looking for such a middle-ground is threefold: (1) under-specified action models (e.g.,  missing precondition or effect). If only one of the orders is produced, as in the unordered top-quality setting, that plan might not correspond to a desired solution;  (2) action ordering preferences: users may have soft constraints that they wish to impose on action orderings, not exposed in the planning model. For example, in the {\em rovers} domain, users may impose preferences over the order of actions such as {\em sample\_soil}, {\em sample\_rock}, {\em take\_image}, as seen in the International Planning Competition (IPC) \cite{dimopoulos2006benchmark}, or in the case of {\em openstacks}, certain products may have priority over others, and the user may want to impose a preference over the {\em make-product} action; (3) known unimportant orderings. 
For example, in transportation domains, one may not care about the orders between the various {\em drive} actions. Planning domains may contain bookkeeping or auxiliary actions often resulting from transformations \egcite{keyder-geffner-jair2009} and it may be known that the order of such auxiliary actions are not important. 
As the solutions for partially ordered top-quality planning are subsets of solutions for top-quality planning, 
one can solve the new problem by post-processing the plans obtained by a top-quality planner. These planners, however, do not benefit from partial order reduction \cite{katz-lee-ijcai2023}. In this work, we explore the possibility of improving planner performance by exploiting partial order reduction based pruning in the context of producing partially ordered top-quality solutions.

Our contributions are as follows: (1) we characterize the  partial order planning problem; (2) we propose three computational approaches to solve the new planning problem. The first approach is a simple base case: post-process the results produced by a top-quality planner. The second and the third approaches leverage successor punning techniques \egcite{wehrle-helmert-icaps2014}, achieved either through  modification to the partial order reduction algorithm or by inspecting the reduced set of successor actions, respectively;  (3) we prove the necessary theoretical guarantees for safe pruning and the use of partial order reduction in the proposed approaches, and (4) we evaluate the three approaches.

\section{Background}

In this section, we introduce the necessary concepts in top-quality planning and partial order reduction.

\subsection{Top-quality Planning}
We consider classical planning tasks in \sas\ formalism \cite{backstrom-nebel-compint1995}, extended with action costs. Such \emph{planning tasks} $\ptaskParSTD$ consist of a finite set of finite-domain {\em state variables\/} $\vars$, a finite set of {\em actions\/} $\ops$, an {\em initial state\/} $\initstate$, and the {\em goal\/} $\goal$.
Each variable $\var\!\in\!\vars$ is associated with a finite domain $\domain(\var)$ of values.
A {\em partial assignment} $p$ maps a subset of variables $\variables{p}\!\subseteq\!\vars$ to values in their domains. The value of $\var$ in $p$ is denoted by $p[\var]$ if $\var \in \variables{p}$
 and 
 {\em undefined} 
otherwise. A partial assignment $\state$ with $\variables{\state} = \vars$ is called a {\em state}. State $\state$ is \emph{consistent} with partial assignment $p$ if they agree on all variables in $\variables{p}$, denoted by $p \subseteq \state$.
$\initstate$ is a state and $\goal$ is a partial assignment. A state $\state$ is  a {\em goal state} if $\goal\subseteq\state$ and $\goalstates$ is the set of all goal states.
Each action $\op$ in $\ops$ is a pair of partial assignments $\langle \pre(\op),\eff(\op)\rangle$ called {\em precondition} and {\em effect}, respectively. Further, $\op$ has an associated \emph{cost} $\cost(\op)\in\nnreals$. An action $\op$ is applicable in $\state$ if $\pre(\op) \subseteq \state$.
All such actions are denoted by $\ops(s)$.
Applying $\op$ in $\state$ results in a state $\state\applied{\op}$ where $\state\applied{\op}[\var] = \eff(\op)[\var]$ for all $\var \in \variables{\eff}$ and $ = \state\applied{\op}[\var] = \state[\var]$ for all other variables. An action sequence $\plan = \tuple{\op_1, \cdots, \op_n}$ is applicable in $\state$ if there are states $s_1, \cdots ,s_{n+1}$ s.t. $s=s_1$, $\op_i$ applicable in $\state_{i}$, and $\state_{i}\applied{\op_i}\!=\!\state_{i+1}$ for $0\leq\!i\leq\!n$. We denote $\state_n$ by $\state\applied{\plan}$.
%
An action sequence with $\initstate\applied{\plan} \in \goalstates$ is called a \emph{plan}. The cost of a plan $\plan$, denoted by $\cost(\plan)$, is the summed cost of the actions in the plan. The set of all plans is denoted by $\allplans$. A plan is \emph{optimal} if its cost is minimal among all plans in $\allplans$.
Cost-optimal planning deals with finding an optimal plan or proving that no plan exists.

Top-quality planning \cite{katz-et-al-aaai2020,katz-et-al-icaps2024} deals with finding
{\em all} plans of up to a specified cost. Formally, the {\bf top-quality} planning problem is as follows.
Given a planning task $\ptask$ and a number $q\in\nnreals$,
find the set of plans $P\!=\!\{ \plan\in\allplans \mid \costfunc(\plan) \leq q\}$.
In some cases, an equivalence between plans can be specified,
allowing to possibly skip some plans, if equivalent plans are found. The corresponding problem is called
{\bf quotient top-quality} planning and it is formally specified as follows.
Given a planning task $\ptask$, an equivalence relation $N$ over its set of plans $\allplans$,
and a number $q\in\nnreals$, find a set of plans $P\subseteq\allplans$ such that
$\bigcup_{\plan\in P} N[\plan]$ is the solution to the top-quality planning problem.
The most common case of such an equivalence relation is when the order of actions
in a valid plan is not significant from the application perspective. 
The corresponding problem is called {\bf unordered top-quality} planning and is formally specified as follows.
Given a planning task $\ptask$ and a number $q\in\nnreals$, find a set of plans
$P\subseteq\allplans$ such that $P$ is a solution to the quotient top-quality
planning problem under the equivalence relation
$\unordered{\ptask} = \{ (\plan, \plan') \mid \plan,\plan'\in\allplans, \multiset{\plan} = \multiset{\plan'}\}$, where $\multiset{\plan}$ is the multi-set of the actions in $\plan$.

\subsection{Partial Order Reduction}

A central to partial order reduction techniques is the notion of {\em safe} successor pruning \cite{wehrle-helmert-icaps2014}.
\begin{definition}[safe]
    \label{def:safeopt}
    Let $succ$ be a successor pruning function for a planning task $\ptask$.  We say that $succ$ is safe if for every  state $s$,  the  cost  of  an  optimal  solution  for $s$ is  the same when using the pruned state space induced by $succ$ as when using the full state space.
\end{definition}

When using safe successor pruning, it is possible to search the pruned state space instead when searching for cost-optimal plans.
Stubborn sets  \cite{wehrle-helmert-icaps2012,alkhazraji-et-al-ecai2012} induce safe successor pruning functions by helping identifying actions that can safely be ignored at node expansion. It is done by specifying a set, such that if an applicable action is not in the set, it can be safely ignored \egcite{wehrle-helmert-icaps2014}.

At the core of these partial order reduction techniques is the idea that, for each non-goal state $s$, if a goal is reachable from $s$, then at least one {\em strongly optimal} (an optimal plan with a minimal number of 0-cost actions among all optimal plans) is preserved in the pruned state space.

Two main notions in stubborn sets are {\em interference} and {\em necessary enabling sets} (NES).
Interference dictates whether two actions disable each other or conflict. Necessary enabling set for an action $\op$ and a set of paths from the initial state is a set of actions that appear on the paths that include $\op$ before its first appearance. There are various definitions of strong stubborn sets in the literature, we use Generalized Strong Stubborn Set (GSSS) by \inlinecite{roeger-et-al-socs2020}.
\begin{definition}[GSSS]
	\label{def:gsss}
    Let $\ptask$ be a planning task  and $s$ be a solvable non-goal state.
    Let $\overline{S}$ be the states along strongly optimal plans for $s$. A set $T\subseteq \ops$ is a GSSS for $s$ if:
    \begin{enumerate}[(i)]
        \item $\sss$ contains actions from a strongly optimal plan for $s$.
        \item For every $\op \in \sss\setminus \ops(s)$, $\sss$ contains a NES for $\op$.
        \item \label{gsss:interfere} For every $\op\in \sss \cap \ops(s)$, $\sss$ contains all $\op'\in \ops$ that interfere with $\op$ in any state $s\in \overline{S}$.
    \end{enumerate}
    The successor function $\porops{s}$ under $\sss$ therefore returns the applicable in $s$ actions from $\sss$, $\porops{s}:=\ops(s)\cap\sss$.
\end{definition}

\section{Partially Ordered Top-Quality Planning}

For a sequence of actions $\plan$ and a subset of task actions $X$, we denote by $\planres{\plan}{X}$ the subsequence obtained from $\plan$ by removing actions not in $X$. With that, we can define a relation over the set of all plans $\allplans$ as 
$$\pordered{X}\!=\!\{ (\pi, \pi')\mid  \pi,\pi'\in\allplans, \multiset{\plan}\!=\!\multiset{\plan'}, \planres{\plan}{X}\!=\!\planres{\plan'}{X} \}.$$ 
The relation $\pordered{X}$ is an equivalence relation: it is reflexive, transitive, and symmetric. With that relation, we can define the {\bf partially ordered top-quality} planning as follows.

\begin{definition}
    \label{def:problem}
    Let $\ptask$ be some planning task over the actions $\ops$ and $\allplans$ be the set of its plans.
	The partially ordered top-quality planning problem is defined as follows. \\
	Given a set of actions $X$ and a number $q\!\in\!\nnreals$, find a set of plans $P\!\subseteq\!\allplans$ that is a solution to the quotient top-quality planning problem under the equivalence relation $\pordered{X}$.        
\end{definition}

The notion of safe successor pruning in Definition \ref{def:safeopt} captures safety for cost-optimal planning, where any plan of minimal cost is a valid solution. 
However, when discussing top-quality planning in general and  partially ordered top-quality planning in particular, the notion of safety changes.

\begin{definition}[top-quality safe]
    \label{def:safetq}
    Let $succ$ be a successor pruning function for a planning task $\ptask$ and let $X$ be a subset of actions of $\ptask$.  
    We say that $succ$ is safe for partially ordered top-quality planning if for every state $s$ and for every plan $\pi_s$ for $s$, 
    there exists a path $\pi'_s$ in the pruned state space induced by $succ$, such that $(\pi_s,\pi'_s)\in\pordered{X}$.
\end{definition}

\section{Solve Partially Ordered Top-Quality Planning}

In order to solve the defined computational problem, one can use an existing top-quality planner, post-processing the obtained plans. 
The currently best performing among these planners is $\kstar$ \cite{lee-et-al-socs2023}, which can exploit successor pruning techniques to improve its efficiency, as 
in unordered top-quality planning \cite{katz-lee-socs2023}. 

The most popular successor pruning techniques are based on partial order reduction methods exploiting stubborn sets \cite{wehrle-helmert-icaps2012,alkhazraji-et-al-ecai2012}. The idea behind these techniques is that a successor can be pruned as long as for every plan pruned there exists a reordering of that plan that is also a plan and it starts with an action that is not pruned. 
Although such pruning technique is deemed safe for unordered top-quality planning with $\kstar$ \cite{katz-lee-socs2023}, it may prune some reorderings of the found plans. Consequently, it cannot be directly applied to top-quality planning or partially ordered top-quality planning.

In order to be able to use partial order reduction based pruning for partially ordered top-quality planning, we need to ensure that the successor function is safe for partially ordered top-quality planning.
This can be done by either modifying the partial order reduction algorithm or externally, by inspecting the reduced set of successor actions. Let us start with the latter first.

\subsection{Extending The Reduced Successors}

Given the set of applicable actions $\ops(s)$ and a partial order reduction  successor function $\porops{s}$, we can define
an extended successor function as follows.
\[
    \eporops{s}{X} =
\begin{cases}
    \porops{s} & \porops{s} \cap X=\emptyset, \\
    \porops{s} \cup X & \porops{s} \cap X\neq\emptyset \wedge X \setminus \ops(s) = \emptyset, \\
    \ops(s) & \mbox{otherwise}
\end{cases}
\]

We show that $\eporops{s}{X}$ can be used for partial ordered top-quality planning.

\begin{theorem}
	\label{th:po-extend}
    The successor function $\eporops{s}{X}$ is safe for partial ordered top-quality planning, when $\sss$ is a GSSS.
\end{theorem}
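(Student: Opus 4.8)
The plan is to argue by induction on the length of $\pi_s$. The base case is the empty plan at a goal state, which lies vacuously in the pruned space. For the inductive step it suffices to exhibit a single action $\op\in\eporops{s}{X}$ such that (a) $\op$ can be commuted to the front of $\pi_s$, yielding a valid plan with the same multiset of actions, and (b) either $\op\notin X$, or $\op$ is the first action of $\pi_s$ lying in $X$. Given such an $\op$, advancing it only reorders the prefix and hence leaves $\planres{\pi_s}{X}$ intact by (b); writing the result as $\langle\op\rangle\cdot\rho$, the suffix $\rho$ is a shorter plan for $\state\applied{\op}$, so the induction hypothesis yields a pruned-space path $\rho'$ with $(\rho,\rho')\in\pordered{X}$. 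Since multisets add over concatenation and prepending $\op$ acts the same way on $\planres{\rho}{X}$ and $\planres{\rho'}{X}$, we obtain $(\pi_s,\langle\op\rangle\cdot\rho')\in\pordered{X}$ with $\langle\op\rangle\cdot\rho'$ in the pruned state space. Thus everything reduces to producing one admissible front action in each of the three branches of $\eporops{s}{X}$.

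Two branches are routine. If $\eporops{s}{X}=\ops(s)$ there is no pruning, so I keep the actual first action of $\pi_s$: (a) holds trivially and (b) holds because, if that action is in $X$, it is by definition the first such. If $\eporops{s}{X}=\porops{s}$ with $\porops{s}\cap X=\emptyset$, I invoke the standard GSSS reordering (the basis of the cited safety of $\porops{s}$ for unordered top-quality planning): the first action of $\pi_s$ lying in $\sss$ is applicable in $s$ by the necessary enabling sets and commutes to the front past the non-$\sss$ prefix by the interference condition, giving (a); and since $\porops{s}\cap X=\emptyset$ this action is outside $X$, giving (b).

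The remaining branch $\eporops{s}{X}=\porops{s}\cup X$, with $\porops{s}\cap X\neq\emptyset$ and $X\subseteq\ops(s)$, is where I expect the main obstacle. I take $\op$ to be the first action of $\pi_s$ belonging to $\porops{s}\cup X$, so that every earlier action is outside $X$ and (b) is immediate. For (a) there are two situations. If $\op\in\porops{s}$, then since the first $\sss$-action of any plan is already applicable and thus lies in $\porops{s}\subseteq\porops{s}\cup X$, it cannot precede $\op$; hence $\op$ is that first $\sss$-action and the standard reordering applies. The genuinely delicate situation is $\op\in X\setminus\porops{s}$: an $X$-action applicable in $s$ that precedes every $\sss$-action of $\pi_s$. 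Proving that such an $\op$ still commutes to the front is the crux, and here I would exploit that $s$ itself belongs to $\overline{S}$, so the interference condition is active at $s$. The idea is that if some earlier action obstructed advancing $\op$ and thereby broke the plan, the effect of that action disturbed by $\op$ would be pinned down by a landmark or a necessary enabling set, placing the action in $\sss$; the interference condition applied to it would then force $\op\in\sss$, contradicting $\op\notin\porops{s}$. Turning this ``any real obstruction would already be stubborn'' observation into an actual commutation argument---formally, showing that every earlier action interfering with $\op$ affects only variables inessential to the suffix, so that moving $\op$ to the front still reaches a goal with the same multiset and the same $X$-restriction---is the step I anticipate requiring the most care.
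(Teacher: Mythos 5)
Your skeleton (explicit induction, reducing everything to exhibiting one admissible front action per branch) is sound, and your treatment of the two routine branches and of the subcase $\op\in\porops{s}$ matches the paper's argument. But the proof is incomplete exactly where you say it is, and the gap cannot be closed: in the branch $\eporops{s}{X}=\porops{s}\cup X$, an applicable action $\op\in X\setminus\sss$ that precedes every $\sss$-action of $\pi_s$ need not commute to the front, and in fact the safety claim itself fails there. Concretely, take variables $v_1,v_2,v_3$ with $s=\{\fdrpair{v_1}{0},\fdrpair{v_2}{0},\fdrpair{v_3}{0}\}$, goal $\goal=\{\fdrpair{v_1}{1},\fdrpair{v_2}{1},\fdrpair{v_3}{1}\}$, and unit-cost actions $a=\tuple{\{\fdrpair{v_2}{0}\},\{\fdrpair{v_1}{1}\}}$, $b=\tuple{\emptyset,\{\fdrpair{v_2}{1}\}}$, $c=\tuple{\emptyset,\{\fdrpair{v_3}{1}\}}$, with $X=\{b,c\}$. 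The optimal plans are $abc$, $acb$, $cab$ ($a$ must precede $b$ because $b$ disables $a$). The set $\sss=\{c\}$ is a GSSS for $s$: $c$ lies on a strongly optimal plan, it is applicable, and no action interferes with $c$ (nothing else reads or writes $v_3$, and $c$ has no precondition); note that the paper's own example following the theorem likewise treats a singleton $\porops{s}$ as legitimate. Then $\porops{s}=\{c\}$, $\porops{s}\cap X\neq\emptyset$, $X\subseteq\ops(s)$, so $\eporops{s}{X}=\{b,c\}$ and the successor $a$ is pruned at $s$. For $\pi_s=abc$ your chosen front action is $b\in X\setminus\porops{s}$, and $bac$ is not a plan; worse, no repair is possible, since the $\pordered{X}$-class of $abc$ is the singleton $\{abc\}$ (it is the only plan whose restriction to $X$ is $\tuple{b,c}$), while every path in the pruned space starts with $b$ or $c$. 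Your hoped-for principle that ``any real obstruction would already be stubborn'' is precisely what fails: conditions (ii) and (iii) of Definition \ref{def:gsss} protect only actions \emph{inside} $\sss$, so $a$ may interfere with $b\in X\setminus\sss$ without either action being forced into $\sss$.

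You should take some comfort in this: the step you flagged as requiring the most care is exactly where the paper's own proof is unsound. The paper asserts that moving the first $\eporops{s}{X}$-action to the front yields a plan ``since $\eporops{s}{X}$ prunes at most as much as $\porops{s}$,'' citing Wehrle and Helmert; but that result licenses commuting the first $\sss$-action to the front (in the example above, $c$, giving $cab$, which changes the order of the $X$-actions), not the first action of an arbitrary superset of $\porops{s}$. The example therefore refutes Theorem \ref{th:po-extend} as stated, not merely its proof. What actually repairs the construction is the paper's other device, PO-GSSS (Definition \ref{def:pogsss}): there, once an applicable $X$-action enters the stubborn set, \emph{all} of $X$ is added to $\sss$ itself, so the interference-closure condition is subsequently applied to those actions too --- in the example this forces $a$ into $\sss$ and the unsafe pruning disappears. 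In short: your routine branches are fine and your identification of the crux is exactly right, but no completion of the argument exists for $\eporops{s}{X}$ built from a bare GSSS; the missing ingredient is interference closure over the added $X$-actions, which must be imposed inside the stubborn set construction rather than bolted on afterwards.
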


\begin{proof}
Let $\ptask$ be a planning task, $s$ be some state, $\porops{s}$ be a strong stubborn set successor function, and $\pi_s =\op_1\ldots\op_n$ be some plan for $s$.
If $\op_1\not\in\eporops{s}{X}$, let $k$ be the smallest index such that $\op_k\in\eporops{s}{X}$. 
We start by noting that 
$\pi'_s =\op_k\op_1\ldots\op_{k-1}\op_{k+1}\ldots\op_n$ obtained from $\pi_s$ by moving the action $\op_k$ to the front, is also a plan for $s$. The claim stems from $\eporops{s}{X}$ pruning at most as much as $\porops{s}$ and therefore the correctness was shown by \inlinecite{wehrle-helmert-icaps2014}.

Now, we show that $(\pi_s,\pi'_s)\in\pordered{X}$.
If $\op_k\not\in X$, then $(\pi_s,\pi'_s)\in\pordered{X}$ and we are done. 
Assume now that $\op_k\in X$.
Since $\op_k\in X$ and $\op_k\in \eporops{s}{X}$, we have $\eporops{s}{X}\cap X \neq \emptyset$ and therefore $\eporops{s}{X}\neq \porops{s}$. Since we 
also have $\op_1\not\in \eporops{s}{X}$, we have
$\eporops{s}{X}\neq \ops(s)$ and thus
$\eporops{s}{X} = \porops{s} \cup X$, the second case of the definition. 
Since $k$ is the smallest index such that $\op_k \in \porops{s}$, 
for all $1\leq i <k$ we have $\op_i \not\in \eporops{s}{X}$ and therefore $\op_i \not\in X$, giving us again $(\pi_s,\pi'_s)\in\pordered{X}$.
\end{proof}

A simple example shows that it is not sufficient to add the set $X$ to the reduced successor function when not all actions in $X$ are applicable. 
Let 
$s = \{ \fdrpair{\var_0}{0}, \fdrpair{\var_1}{0} \}$,  $\goal = \{ \fdrpair{\var_0}{2}, \fdrpair{\var_1}{1} \}$, and
$\ops = \{ \op_1\!=\!\tuple{ \{ \fdrpair{\var_0}{0} \}, \{ \fdrpair{\var_0}{1} \} }, \op_2=\tuple{\{ \fdrpair{\var_0}{1} \}, \{ \fdrpair{\var_0}{2} \} }, \op_3=\tuple{\{ \fdrpair{\var_1}{0} \}, \{ \fdrpair{\var_1}{1} \} } \}$. There are three plans for $s$, namely $\pi_1=\op_1\op_2\op_3$, $\pi_2=\op_3\op_1\op_2$, and $\pi_3=\op_1\op_3\op_2$. If $X = \{ \op_2, \op_3 \}$, then  $\pordered{X} = \{ (\pi_2,\pi_3) \}$.
Since $\ops(s) = \{\op_1,\op_3\}$ and $\op_1$ and $\op_3$ are completely independent, a partial order reduction may reduce either of these actions. 
If $\porops{s} = \{\op_3\}$, the plans $\pi_1$ and $\pi_3$ are pruned, and $\pi_2$ remains. While $\pi_3$ is equivalent to $\pi_2$, $\pi_1$ is not.
The plan $\pi_2$, obtained from $\pi_1$ by moving the action $\op_3$ to the front, changes the order between actions $\op_2$ and $\op_3$.

The benefit of the approach above is that it can work with any partial order reduction technique and does not require modifications to the technique. 
We now move to the other approach of modifying the partial order reduction technique.

\subsection{Modifying Partial Order Reduction}

Focusing on stubborn sets, we show that it is sufficient to extend condition \ref{gsss:interfere} of Definition \ref{def:gsss} by adding another condition on top of interference. 

\begin{definition}[PO-GSSS]
	\label{def:pogsss}
    Let $\ptask$ be a planning task over the actions $\ops$ and $X\subset\ops$ be some set of its actions. Let $s$ be a solvable non-goal state.
    Let $\overline{S}$ be the states along strongly optimal plans for $s$. A set $\sss\subseteq \ops$ is a PO-GSSS for $s$ if:
    \begin{enumerate}[(i)]
        \item $\sss$ contains actions from a strongly optimal plan for $s$.
        \item For every $\op \in \sss\setminus \ops(s)$, $\sss$ contains a NES for $\op$.
        \item \label{pogsss:interfere} For every $\op\in \sss\cap \ops(s)$, $\sss$ contains all $\op'\in \ops$ that  interfere with $\op$ in any state $s\in \overline{S}$.
        \item \label{pogsss:same-set} For every $\op\in \sss \cap \ops(s)$, if $\op\in X$, $\sss$ contains all actions in $X$.
	\end{enumerate}
\end{definition}

\begin{theorem}
	\label{th:po-gsss}
	PO-GSSS is safe for partial ordered top-quality planning.
\end{theorem}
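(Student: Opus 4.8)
The plan is to show that PO-GSSS induces a successor function $\porops{s}$ that is safe for partially ordered top-quality planning, meaning that for every state $s$ and every plan $\pi_s$ for $s$ there is a plan $\pi'_s$ reachable in the pruned state space with $(\pi_s,\pi'_s)\in\pordered{X}$. First I would reduce to the single-step claim that is standard in the stubborn-set literature: given a plan $\pi_s=\op_1\ldots\op_n$ for $s$ whose first action $\op_1$ is pruned, I can reorder it to a plan starting with some $\op_k\in\porops{s}$, where $k$ is the smallest index with $\op_k\in\porops{s}$, by moving $\op_k$ to the front. Because PO-GSSS differs from GSSS only by the \emph{addition} of condition (\ref{pogsss:same-set}), it still satisfies conditions (i)--(iii), so $\porops{s}$ is a genuine GSSS successor function; hence the reorderability of $\pi_s$ into $\pi'_s=\op_k\op_1\ldots\op_{k-1}\op_{k+1}\ldots\op_n$ as a valid plan follows from the correctness result of \inlinecite{wehrle-helmert-icaps2014}, exactly as in the proof of Theorem \ref{th:po-extend}.

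The substance of the theorem is then to verify that this reordering preserves the equivalence $\pordered{X}$, i.e.\ that $\planres{\pi_s}{X}=\planres{\pi'_s}{X}$. Since $\pi'_s$ is only a reordering of $\pi_s$ (same multiset of actions), the multiset condition of $\pordered{X}$ holds automatically, and the only thing to check is that the $X$-projections agree. Moving $\op_k$ to the front changes the relative order of $\op_k$ with respect to $\op_1,\ldots,\op_{k-1}$ only. So I would split on whether $\op_k\in X$. If $\op_k\notin X$, then $\op_k$ does not appear in either $X$-projection and the projections are literally identical, giving $(\pi_s,\pi'_s)\in\pordered{X}$.

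The interesting case, and the one where condition (\ref{pogsss:same-set}) does the work, is $\op_k\in X$. Here I must argue that none of $\op_1,\ldots,\op_{k-1}$ lies in $X$, so that the order of $\op_k$ relative to $X$-actions is unchanged. The key step is to use minimality of $k$ together with the new condition: since $\op_k\in\porops{s}\cap\ops(s)$ and $\op_k\in X$, condition (\ref{pogsss:same-set}) forces $X\subseteq\sss$. Now suppose for contradiction that some $\op_i\in X$ with $i<k$. I would show that this $\op_i$ is applicable in $s$ (it must be, being reorderable to the front as an early action commuting past the pruned prefix, or by the interference/NES structure ensuring early non-pruned-set actions that lie in $X$ are applicable) so $\op_i\in\ops(s)\cap X\subseteq\sss\cap\ops(s)$, hence $\op_i\in\porops{s}$, contradicting the minimality of $k$. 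Therefore no $\op_i$ with $i<k$ is in $X$, the $X$-projections of $\pi_s$ and $\pi'_s$ coincide, and $(\pi_s,\pi'_s)\in\pordered{X}$.

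The main obstacle I anticipate is precisely this last applicability argument: establishing that an action $\op_i\in X$ appearing before position $k$ is actually applicable in $s$, so that condition (\ref{pogsss:same-set}) can be brought to bear via $X\subseteq\sss$ and the identity $\porops{s}=\ops(s)\cap\sss$. The extended-successor example in the excerpt shows this can fail when not all of $X$ is applicable, which is exactly why condition (\ref{pogsss:same-set}) is stated for $\op\in\sss\cap\ops(s)$ and why $X\subseteq\sss$ (rather than $X\subseteq\porops{s}$) is what we get; the care in the proof is to combine $X\subseteq\sss$ with the GSSS reorderability guarantee so that any blocking $X$-action before $\op_k$ would itself have had to be an applicable, non-pruned action, contradicting the choice of $k$.
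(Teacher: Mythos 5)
Your proposal follows essentially the same route as the paper's proof: move the first non-pruned action $\op_k$ of the plan to the front (justified by the Wehrle--Helmert argument, since a PO-GSSS satisfies conditions (i)--(iii) and is therefore itself a GSSS), split on whether $\op_k\in X$, and in the interesting case use condition (iv) to conclude $X\subseteq\sss$, so that no action before position $k$ can lie in $X$. The one divergence is where the minimality of $k$ is anchored. The paper defines $k$ as the minimal index with $\op_k\in\sss$ (the stubborn set itself, not the pruned successor set $\ops(s)\cap\sss$); with that choice the final step is immediate, since $X\subseteq\sss$ and minimality over $\sss$ directly give $\op_i\notin X$ for all $i<k$, and applicability of earlier actions never needs to be discussed. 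You instead define $k$ as minimal over the pruned successor set, which forces your contradiction argument: a hypothetical $\op_i\in X$ with $i<k$ must be shown applicable in $s$. As stated, that intermediate claim is not quite right --- nothing guarantees that $\op_i$ \emph{itself} is applicable, and your first justification (``reorderable to the front, commuting past the pruned prefix'') is circular. What the NES machinery actually gives (it is exactly the Wehrle--Helmert lemma you already invoke for $\op_k$) is that the \emph{earliest} plan action lying in $\sss$ is applicable. Since $X\subseteq\sss$, your hypothetical $\op_i$ forces that earliest $\sss$-index to be at most $i<k$, and its applicability puts that action in $\ops(s)\cap\sss$, contradicting minimality of $k$; equivalently, the two notions of minimality coincide. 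With that small repair your argument closes; the paper's formulation simply avoids the detour.
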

\begin{proof}
Let $\ptask$ be a planning task, $s$ be some state, $\sss$ be a PO-GSSS for $s$, and $\pi_s =\op_1\ldots\op_n$ be some plan for $s$.
Let $k$ be the minimal index such that $\op_k\in\sss$. 
Since $\sss$ is a super-set of a generalized strong stubborn set, using the same argument as \inlinecite{wehrle-helmert-icaps2014}, we can show that $\op_k$ is applicable. 
Let $\pi'_s =\op_k\op_1\ldots\op_{k-1}\op_{k+1}\ldots\op_n$
If $\op_k\not\in X$, then $(\pi_s,\pi'_s)\in\pordered{X}$ and we are done. 
If $\op_k\in X$, then for all $1\leq i < k$, $\op_i\not\in X$, due to $k$ being smallest such index and therefore, again,  $(\pi_s,\pi'_s)\in\pordered{X}$.
\end{proof}

\begin{figure*}[t]
    \centering
        \begin{tabular}{ccc}
            \includegraphics[width=0.27\textwidth]{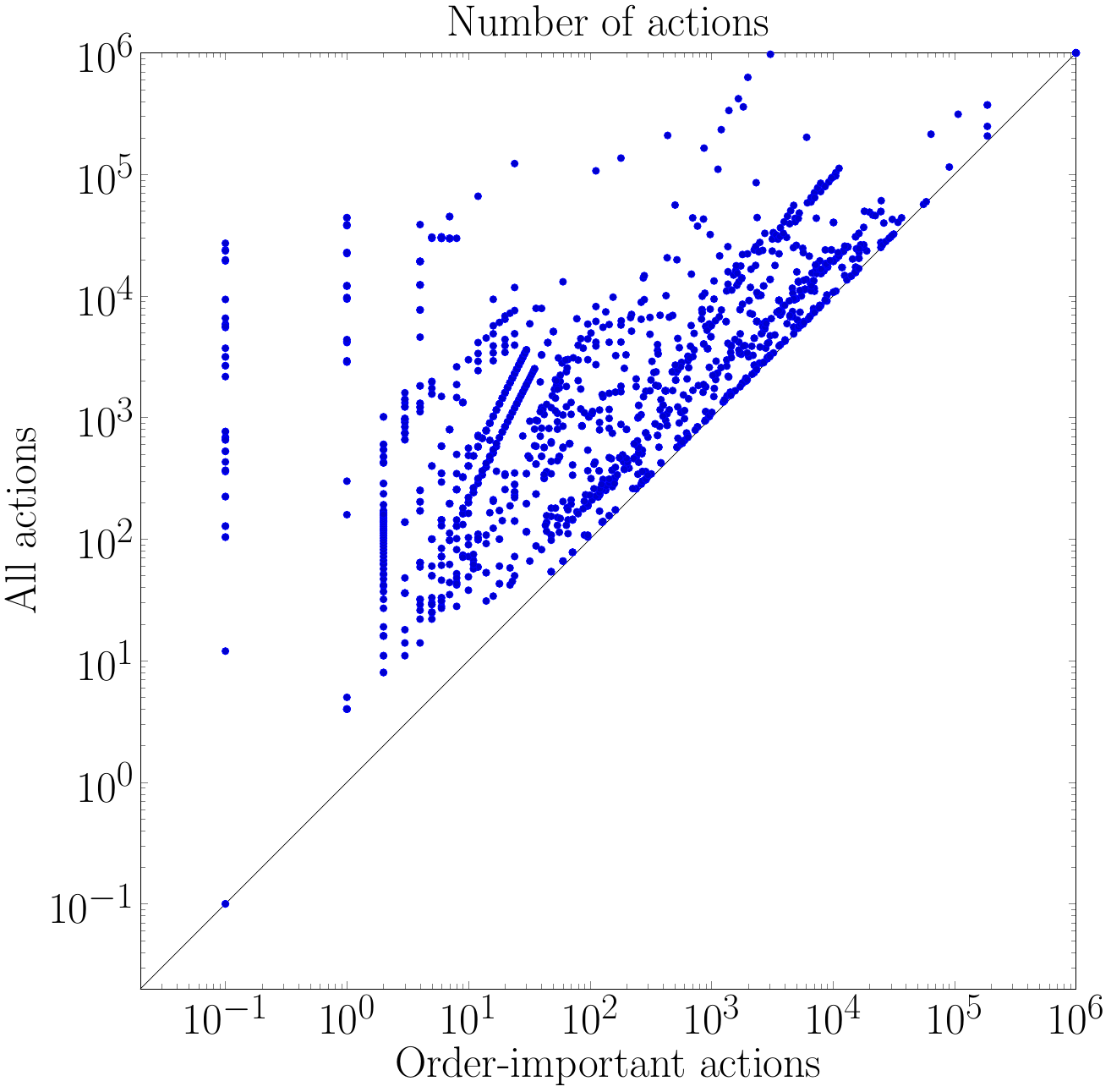} &
            \includegraphics[width=0.38\textwidth]{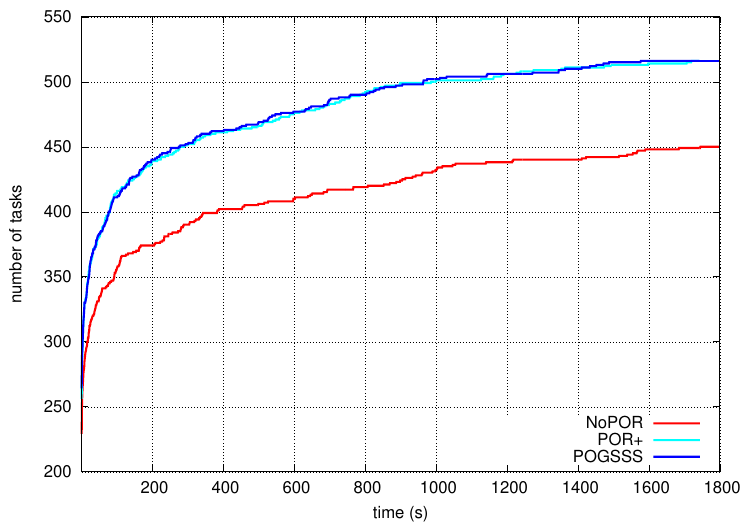} &
            \includegraphics[width=0.27\textwidth]{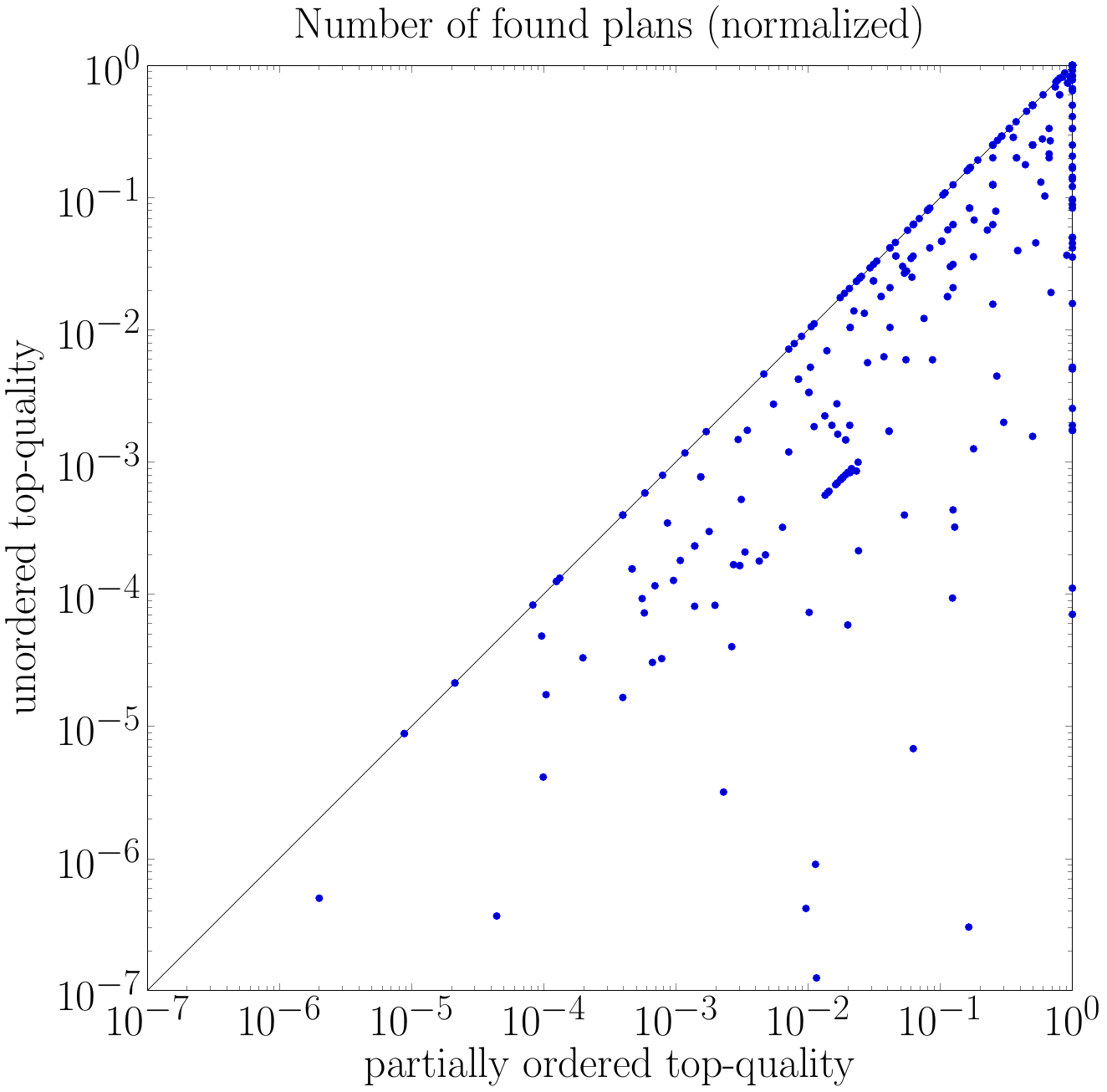} 
            \\
            (a) & (b) & (c)
        \end{tabular}
        \caption{\label{fig:unified}     
        (a) The number of captured actions compared to the total number of actions, (b) anytime performance of tested configurations, and (c) the solution size (number of plans) for partially ordered top-quality planning compared to solution size of unordered top-quality planning, normalized by the solution size of top-quality planning.}
\end{figure*}

\section{Experimental Evaluation}

\begin{table}[!ht]
    \def\arraystretch{0.95}
        \centering
        {
        \scriptsize
        \begin{tabular}{l@{~~}l}
    agricola & $(take\_food|plow\_field|build\_fences|improve\_home).*$\\
    airport & $takeoff\_seg.*$ \\
    barman & $(clean\mbox{-}shot|clean\mbox{-}shaker|empty\mbox{-}shake).*$ \\
    blocks & $put\mbox{-}down.*$ \\
    depot & $(load|unload).*$ \\
    driverlog & $(load\mbox{-}truck|unload\mbox{-}truck|board\mbox{-}truck).*$ \\
    floortile & $paint.*$ \\
    freecell & $send.*$ \\
    ged & $reset.*$ \\
    gripper & $pick.*$ \\
    hiking & $put.*$ \\
    logistics & $(load|unload).*$ \\
    miconic & $board.*$ \\
    movie & $re.*$ \\
    mprime & $succumb.*$ \\
    mystery & $succumb.*$ \\
    nomystery & $(load|unload).*$ \\
    openstacks & $make\mbox{-}product.*$ \\
    organic-synthesis & $sodium.*$ \\
    parcprinter & $(color|lc1).*$ \\
    parking & $move\mbox{-}car\mbox{-}to\mbox{-}car.*$ \\
    pathways & $initialize.*$ \\
    pipesworld & $push\mbox{-}start.*$ \\
    psr-small & $open.*$ \\
    rovers & $(sample|take\_image).*$ \\
    satellite & $take\_image.*$ \\
    scanalyzer & $rotate.*$ \\
    snake & $move\mbox{-}and\mbox{-}eat\mbox{-}spawn.*$ \\
    spider & $start\mbox{-}dealing.*$ \\
    storage & $lift.*$ \\
    termes & $.*\mbox{-}block.*$ \\
    tidybot & $finish\mbox{-}object.*$ \\
    tpp & $buy.*$ \\
    transport & $(pick\mbox{-}up|drop).*$ \\
    trucks & $(load|unload|deliver).*$ \\
    woodworking & $load.*$ \\
    zenotravel & $board.*$ 
    \end{tabular}}
    \caption{\label{tab:regex}Regular expressions that are used in the experimental evaluation to capture order-important actions.}
    \vspace*{-0.3cm}
    \end{table}
    
    \begin{table}[tb]
         \def\arraystretch{1.1}
      {\footnotesize
      \begin{tabular}{@{}lrrr@{}}
      Coverage & NoPOR & POR+ & POGSSS \\
      \hline
      airport \numtasks{50} & 7 & \textbf{8} & \textbf{8} \\
      driverlog \numtasks{20} & 10 & \textbf{11} & \textbf{11} \\
      movie \numtasks{30} & 3 & \textbf{22} & \textbf{22} \\
      mystery \numtasks{30} & 20 & \textbf{22} & \textbf{22} \\
      organic-synthesis-split18 \numtasks{20} & 18 & \textbf{19} & 18 \\
      parcprinter-08 \numtasks{30} & 6 & 10 & \textbf{11} \\
      parcprinter11 \numtasks{20} & 3 & 6 & \textbf{7} \\
      parking14 \numtasks{20} & 2 & \textbf{3} & 2 \\
      psr-small \numtasks{50} & 46 & \textbf{48} & \textbf{48} \\
      satellite \numtasks{36} & 5 & 6 & \textbf{7} \\
      snake18 \numtasks{20} & 4 & \textbf{6} & 5 \\
      termes18 \numtasks{20} & 4 & \textbf{5} & \textbf{5} \\
      tidybot14 \numtasks{20} & 6 & \textbf{7} & \textbf{7} \\
      woodworking08 \numtasks{30} & 7 & \textbf{21} & \textbf{21} \\
      woodworking11 \numtasks{20} & 2 & \textbf{15} & \textbf{15} \\
      \hline
      \textbf{Sum other \numtasks{1139}} & 307 & 307 & 307 \\
      \textbf{Sum \numtasks{1555}} & 450 & \textbf{516} & \textbf{516} \\
      \end{tabular}
      }         
         \caption{\label{tab:coverage} Per-domain coverage of the tested approaches.
         The last row depicts the overall coverage.
         }
      \end{table}

To evaluate the performance of our suggested approaches we implemented these approaches on top of $\kstar$ algorithm implementation \cite{lee-et-al-socs2023}, which in turn is built on top of the Fast Downward planning system \cite{helmert-jair2006}.
The code is available at \url{https://github.com/ibm/kstar}.
All experiments were performed on Intel(R) Xeon(R) Gold 6248 CPU @ 2.50GHz machines, with the timeout of 30 minutes and memory limit of 3.5GB per run.
The benchmark set consists of \strips\ benchmarks from optimal tracks of International Planning Competitions 1998-2018. We have manually specified a subset of actions of which to preserve the orders per domain, for  
a total of 52 domains, with the total number of 1555 tasks.
The regular expressions capturing the names of actions whose orders are important are provided in Table \ref{tab:regex}. 
We compared our two suggested approaches of extending the POR successor generator (denoted by POR+) and of modifying the definition of generalized strong stubborn sets (denoted by PO-GSSS) to a baseline, running a top-quality planner and post-processing the plans to filter out equivalent plans (denoted by NoPOR). We use $\kstar$ with symmetry based pruning \cite{katz-lee-ijcai2023} and \lmcut\ heuristic \cite{helmert-domshlak-icaps2009}. We experiment with the atom-centric stubborn sets \cite{roeger-et-al-socs2020}. In our experiments, we measure the coverage of solving the partially ordered top-quality planning problem for $q=1$, i.e., finding all non-equivalent cost-optimal plans. 

First, to show that we capture a non-trivial subset of actions, 
Figure \ref{fig:unified} (a) plots the number of order-important compared to the total number of actions. Out of 1555 tasks, in 15 cases all actions are marked as order-important (nodes on the diagonal), and in 37 cases none were marked as order-important.
Moving on to the coverage results, the overall any-time coverage is shown in Figure  \ref{fig:unified} (b). Note that both POR+ and PO-GSSS have a very similar performance, significantly outperforming the baseline approach that does not perform partial order reduction. The per-domain coverage for the full 30 minutes time bound is shown in Table \ref{tab:coverage}. There are 15 domains where the coverage is not the same for all three tested approaches. The most significant increase in coverage from exploiting partial order reduction appears in the {\em movie}  domain (from 3 to 22), following by {\em woodworking} domains (from 7 and 2 to 21 and 15). 
Finally, Figure \ref{fig:unified} (c) depicts the solutions size (number of plans) of the partially ordered top-quality planning problem, comparing to unordered top-quality planning. Both are normalized by the full top-quality planning solution size, fitting both values into a [0,1] range. 
Out of 325 tasks where all three of these values are available, 109 are on the diagonal. Out of the other 216 tasks, the largest relative decrease from top-quality solution size was in {\em pathways}, from ~4M to 8 plans.

\section{Discussion and Future Work}

We propose a new computational problem in top-quality planning, interpolating between the pure top-quality and the unordered top-quality. We adapt partial order reduction pruning technique to address this new computational problem, showing that such pruning is practically beneficial.

For future work, we would like to explore the possibility to prune all orders during search, while efficiently reconstructing the important orders from the unordered top-quality solution. Another important avenue for future research is how to efficiently capture a more general class of problems in top-quality that deal with preserving {\em some} orders. For example, two plans that use different instances of essentially the same action might be considered equivalent from application perspective. That can happen, for instance, as part of translation from PDDL to \sas.

\fontsize{10pt}{11pt}\selectfont

\end{document}